\newtheorem{theorem}{Theorem}[section]
\newtheorem{lemma}[theorem]{Lemma}
\newtheorem{property}[theorem]{Property}
\theoremstyle{definition}
\newtheorem{definition}{Definition}[section]
\theoremstyle{remark}
\title{How ConvNets model Non-linear Transformations}
\author{Dipan K.~Pal \& Marios Savvides \\
Department of Electrical and Computer Engineering\\
Carnegie Mellon University\\
Pittsburgh, PA 15213, USA \\
\texttt{\{dipanp,msavvid\}@cmu.edu} \\
}
\begin{document}

\maketitle

\begin{abstract}
In this paper, we theoretically address three fundamental problems involving deep convolutional networks regarding invariance, depth and hierarchy. We introduce the paradigm of Transformation Networks (TN) which are a direct generalization of Convolutional Networks (ConvNets). Theoretically, we show that TNs (and thereby ConvNets) are can be invariant to non-linear transformations of the input despite pooling over mere local translations. Our analysis provides clear insights into the increase in invariance with depth in these networks. Deeper networks are able to model much richer classes of transformations. We also find that a hierarchical architecture allows the network to generate invariance much more efficiently than a non-hierarchical  network. Our results provide useful insight into these three fundamental problems in deep learning using ConvNets.
\end{abstract}

\section{Introduction}

It is a well known fact that deep Convolutional Networks (or ConvNets) \cite{lecun1998gradient} generate invariance to local translations due to convolutions followed by a form of pooling. In practice, however, studies such as \cite{krizhevsky2012imagenet} have applied these models very successfully to domains such as vision, which typically involve data undergoing highly non-linear transformations. It is therefore clear, that these models can model invariance towards these global non-linear transformations despite solely employing pooling over local translations. Further, \cite{simonyan2014very} observed that a deeper ConvNet usually performs better (and thus is more invariant) on large scale tasks. This raises some fundamental questions. 

\textbf{Problem 1:}\textit{ How does a ConvNet generate invariance to global non-linear transformations through pooling over mere local translations?}

\textbf{Problem 2:}\textit{ How does invariance increase with depth in ConvNets?}

\textbf{Problem 3:}\textit{ How does a hierarchical architecture help?}


These have been long standing problems in vision since the inception of these networks. Intuitions and empirical observations abound, the problems still are not completely addressed from a theoretical standpoint. 

\textbf{Main results:} In this paper, we take a significant step towards answering these questions.

\textit{Addressing Problem 1:} We show that these non-linear invariances arise from the \textit{architecture} of the network itself rather than the exact features learnt. More specifically, the entire pipeline of convolution followed by pooling and then a non-linearity itself contributes towards learning such powerful invariances. Although optimizing the features is important to capture the most amount of ``information" and provide descriptive features, \textit{invariance} strictly speaking, is not generated due to the features themselves. Instead, it is a by-product of the architecture. Our main result shows that a $L$ layered ConvNet (and also a generalization of such architectures introduced as Transformation Networks or TNs), generates invariance to transformations $h(x)$ of the input $x$ of the form $h(x) =  g_1\circ \eta \circ g_2 ...\eta \circ g_L(x)$\footnote{ $g \circ \eta (x) = g(\eta (x))$} where $g_i$ is a unitary transformation and $\eta$ is a point-wise applied non-linearity satisfying certain conditions of unitarity and stability. A very good approximation of such a non-linearity is the hard-ReLU which is prevalent in practice  \cite{nair2010rectified}, thereby providing a theoretical justification of the same. The form of $h(x)$ transformation highly non-linear. Even though unitary transforms include commonly known and ``elementary" transforms such as translation and in-plane rotation, their composition with $L-1$ non-linearities make the overall transformation very rich and powerful. 

\textit{Addressing Problem 2:} Further, it immediately shows why depth is an important parameter in ConvNet architecture design. Increasing $L$ in our model allows us to be invariant to a more expressive transformation form. Loosely speaking, each layer of the ConvNet can be said to generate invariance to one pair of $g_i$ and $\eta$. The precise form of $h(\cdot )$ depends on the exact hierarchy employed by the architecture and is discussed in more detail in a later section. The architecture of a ConvNet itself is a form of incorporating a prior on the kind of  nuisance transformations expected to be observed in the data. This is complimentary to the regularization implications of weight sharing. 

\textit{Addressing Problem 3:} We also show that the hierarchical nature of a ConvNet also helps in significantly improving efficiency in generating invariance. A $L$ layered ConvNet reduces the number of required observations of transformed inputs for training from $\mathcal{O}(|\mathcal{G}|^L)$ to $\mathcal{O}(|\mathcal{G}|)$, a reduction of the order $L$. 


\textbf{Intuitive Proof Sketch:} We first prove that each node at the first layer of a ConvNet (also Transformation Networks) generates invariances towards or factors out local translations (and more general unitary transforms for TNs). Then we put two conditions (unitarity and stability) on the point-wise non-linearity  used in these networks such that transformations that were not factored out in the first layer are propagated to the second layer. We find that a the implicit mapping of a fractional degree polynomial kernel exactly satisfies unitarity and very closely approximates stability for a well chosen range of degrees. This function is also a very close approximation of the hard-ReLU non-linearity. The non-linearity helps preserve the group structure of the transformed inputs in the feature space. We finally show that every second layer node then is able to generate invariance to the left over transformations (not captured in the first layer) even if they had acted on the input after a non-linearity. This way the second layer node is overall invariant to a non-linear transformation of the input. As we go up passing through more layers, they add in abilities to be invariant to more non-linearities and complexities.

\textbf{Prior Art: }Deep learning despite its great success in learning useful representations, has yet to have a very concrete theoretical foundation. Nonetheless, there have been many attempts at a deeper understanding of its mechanics. For instance, \cite{kawaguchi2016deep}  proved important results for deep neural networks. Whereas \cite{CohenSS15a, HaeffeleV15} approached deep learning from the perspective of general tensor decompositions. All of these studies however, have focused on the supervised version of deep learning. Under supervision, theoretical results can be broadly described to be concerned with the optimality of a solution or properties of the optimization landscape. Given the success of supervised models, such an approach is definitely beneficial in advancing overall understanding. It however, considers architectures more general in nature since supervised results for specialized architectures are more difficult to obtain.

Unsupervised deep learning however, promises to play an important role in the future not to mention kindling interests from a neoroscientific perspective.  The analysis of our models is therefore aimed at the unsupervised setting and focuses more on the invariance properties of such networks. This reveals new insights into properties of the architecture itself and provides an explanation as to why increasing depth is useful on many fronts. Even though there have been theoretical efforts \cite{delalleau2011shallow, martens2014expressive} to provide results related to the ``depth" of a network, the models studied do not immediately resemble the most successful architecture class in practice, ConvNets and its variants. We present results on a generalization of ConvNets called Transformation Networks (TN) which are directly applicable to ConvNets.  In fact, TNs are very closely related to ConvNets and become identical under a very simple constraint. 

There have been a few important efforts towards providing results from a unsupervised standpoint \cite{anselmi2013unsupervised, mallat2012group}. \cite{mallat2012group} shows that local translation invariance leads to contractions in space. However, it is not clear whether those contractions are due to \textit{non-linear} invariances. \cite{anselmi2013unsupervised} approach the problem in a fashion more similar to ours with the use of unitary groups to ``transfer" invariance. They show that for a hierarchical feed-forward network with unitary group structure, the features at top layers would be exactly invariant to groups of transformations acting over a larger receptive field. Our main result, on the other hand is more precise. We show that the top layer features is in fact invariant to \textit{non-linear} transformations \textit{despite} only pooling over \textit{linear} transforms. Further, these non-linear transformations need not form a group overall. They are only required to form a group locally at every layer. The architecture we consider is very closely related to practical architectures used for ConvNets, whereas \cite{anselmi2013unsupervised} model the architecture utilizing simple and complex cell constructions from a more biologically motivated approach. Further, they hypothesize that the non-linearity serves as a way measuring bins of the CDF of an invariant distribution. On the other hand, we consider the non-linearity to be an integral part of the process to preserve unitary group structure in the feature space. This also leads to it being a part of the class or range of transformations to be invariant towards. In turn this observation leads to the critical result that the overall architecture is invariant to \textit{non-linear transformations despite pooling over linear transforms}. 

Finally, \cite{bruna2013learning, paul2014does} also applied group theory to a certain extent to the problem of representation learning. These works provide useful insights into stabilization with groups. Here, stabilization is meant along the lines of resulting in a contraction or non-expansion of the space. Nonetheless, they do not explore exact invariance to explicitly non-linear transforms as our study.

\section{Transformation Networks}

We  introduce the paradigm of Transformation Networks (TN), a more general way of looking at feed forward architectures such as ConvNets and present results on these and then apply them directly to ConvNets. We first briefly review the notion of unitary groups and group invariant functions.

\textbf{Premise and Notations:}  We denote images and general vectors by $x\in \mathbb{R}^d$. Given such a $x$, we define a support set $\Lambda$ which defines a subset of pixels or dimensions over $x$, \emph{i.e.} $x_{\Lambda}$ defines the subset of pixels contained in the set of indices $\Lambda$ arranged in a column. Given an image $x$, we consider it divided it into small non-overlapping regions covering the entire image. Each support set is denoted by $\Lambda_{li}$ \emph{i.e.} the $i^{th}$ support set at layer $l$ as shown in Fig.~\ref{fig_TN_1}. $\Lambda_{li}$ is a union of certain $\Lambda_{(l-1)j}$ as defined by a hierarchy (say in a ConvNet).  For instance in Fig.~\ref{fig_TN_1}, $\Lambda_{2}$ (shaded light blue) is the union of the supports $\Lambda_{11}, \Lambda_{12}, \Lambda_{13}, \Lambda_{14}$ in the image plane. This union of support is similar to the hierarchical structure observed in ConvNets and is defined by the specific architecture.

\textbf{Unitary-Group:} A group $\mathcal{G}$ is a set of elements $g\in \mathcal{G}$ along with the properties of closure, associativity, invertibility and identity \footnote{We will mostly deal with continuous groups however, our results also hold for discrete groups.}. A unitary group is any group whose elements are unitary in nature, \emph{i.e.} the dot-product is preserved under the unitary transformation. More precisely, $\langle g(x), g(y)   \rangle = \langle  x, y \rangle~~\forall x, y$. $g(x)$ denotes the action of the group element (or transformation) $g$ on $x$. The action of a group can also be constrained by $\Lambda$. For instance, $g_\Lambda$ is a unitary transform acting only on the support set $\Lambda$. We express the  action of a transform on a restricted support by $g_\Lambda(x)$.

\begin{figure}
\centering
\includegraphics[width=0.8\columnwidth,valign=m]{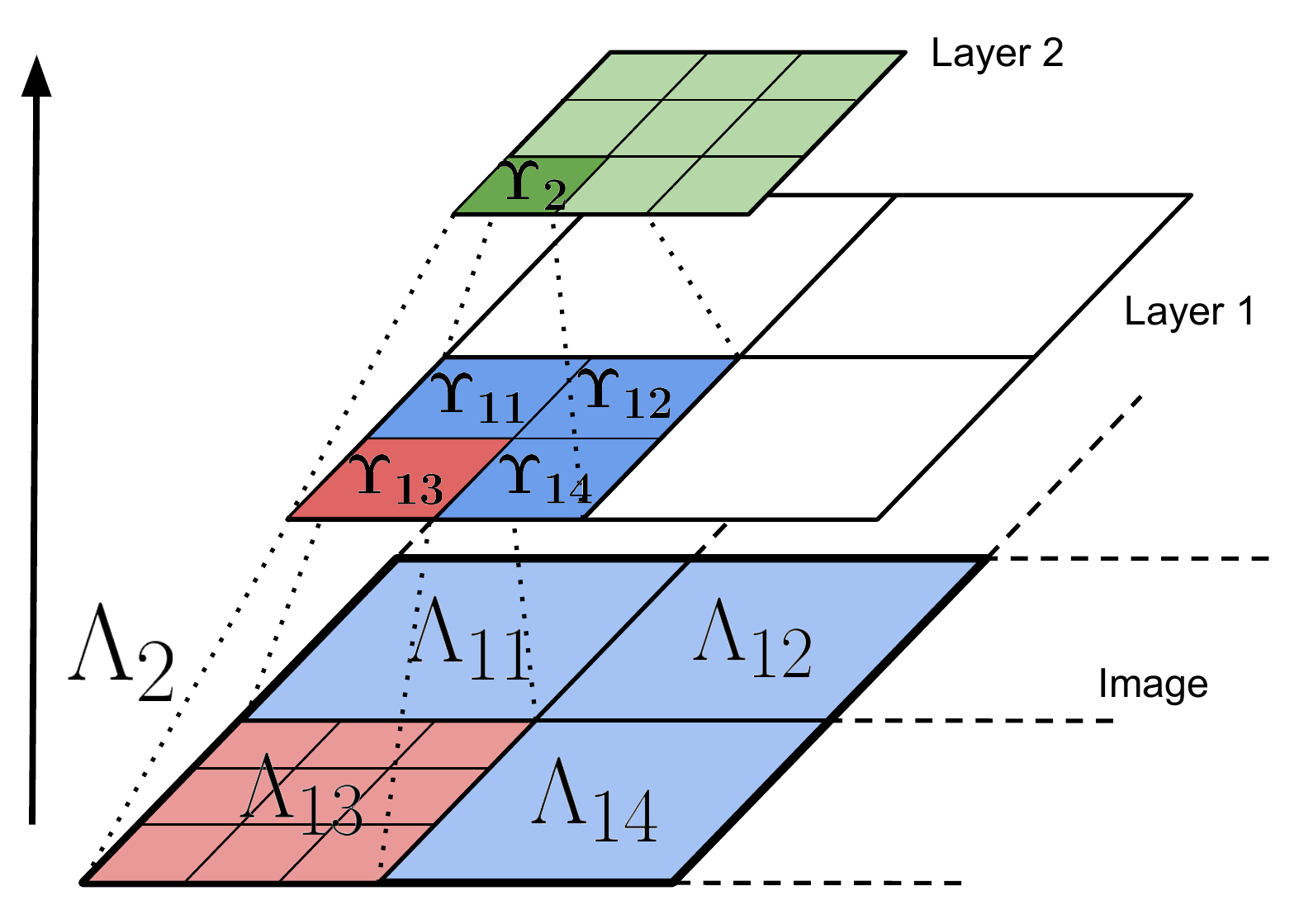}
\centering 
\caption{ Structure of a two layered Transformation Network (TN). Node $\Upsilon_{13}$ (bold red) of layer 1 has a receptive field highlighted in light red. Node $\Upsilon_2$ of layer 2 (bold green) has a receptive field highlighted in light blue (also contains node $\Upsilon_{13}$'s receptive field). }
\label{fig_TN_1}
\end{figure}

\textbf{The Unitary Non-linear Image Transformation Model:} Unitary groups are very useful in modelling linear transformations in domains such as images. Indeed, translation and in-plane rotation can be modelled as unitary and expressed as $g(x)$. However, coupled with a non-linearity $\eta(\cdot)$ and restricted support on the image $\Lambda$, unitary transforms can model a far richer class of images. Let $\mathcal{X} = \{ g(x) ~|~g\in \mathcal{G}  \}$ be the set of all transformations of $x$ generated $\mathcal{G}$. Now, for a given non-linearity $\eta(\cdot)$, consider a non-linear transformation as $g_{\Lambda_{11}}g_{\Lambda_{12}}g_{\Lambda_{13}}g_{\Lambda_{14}}(\eta( g_{\Lambda_{2}}(x)  ))$. Here $g_{\Lambda_{11}}g_{\Lambda_{12}}$ apply the individual transforms over the specified support. Notice that $g_{\Lambda_{11}}...g_{\Lambda_{14}}$ are jointly unitary. This is because each $g_{\Lambda_{1i}}$ is a unitary transformation over the support $\Lambda_{1i}$, and $\bigcap_i \Lambda_{1i} =0$, \emph{i.e.} the supports are non-overlapping. Lastly, $\Lambda_2$ is a union of $\Lambda_{11}..\Lambda_{14}$ and $g_{\Lambda_2}$ is a unitary transformation over a larger support. This expression of a non-linear transformation of $x$ is more powerful than the simply linear $g(x)$ primarily due to the non-linearity $\eta$, thereby allowing the modelling of much richer variation in data.


\textbf{Transformation Networks (TN):} Transformation Networks (TN) are essentially feed forward networks that operate primarily on the principle of generating invariance towards a group or set of transformations through pooling modelled as group integration. The architecture of of these networks are hierarchical in nature and they \textit{explicitly} invoke invariances only locally and can potentially have multiple layers. In doing so, they \textit{implicitly} can model global invariances. Consider a TN with $L$ layers. Each layer has a number of TN nodes each with a receptive field size of $(w_l, h_l)$, \emph{i.e.} each cell or node in the layer can only look at patches of size $w_l \times  h_l$ of the output from the previous layer. Every node at layer $l$ can take in a number of input channels $o_{l-1}$ from the previous layer, and output a number of channels $o_l$ to the next layer. Further, each node has a set of filters or templates $\mathcal{T}_{li} = \{ g(t_i) ~|~g\in \mathcal{G}_{li} \}, \forall i=1,..,o_l$ of size $w_l \times  h_l$. Here $\mathcal{G}_{li}$ is any unitary group specific to the $i^{th}$ output of the $l^{th}$ layer. We call $\mathcal{T}_{li}$ as a \textit{template set} (henceforth to be assumed under some specified $\mathcal{G}_{li}$). The template set simply a set of templates transformed under the action of $\mathcal{G}_{li}$. Thus, there are $o_l$ such transformation blocks in layer $l$. Every node contains a pooling operation which performs group integration over the template set (essentially mean pooling). Further, there is a point-wise non-linearity applied to the pooled feature.


\textbf{TN Node:} A TN node $\Upsilon_{li}$ (the $i^{th}$ node at layer $l$) provides a single dimensional feature given a patch $x$ of size $(w_l, h_l)$. The node output, for a given non-linearity $\eta$ and input $x$, is given by
\begin{align}
  \Upsilon_{li}(x) &= \eta(  \int_{\mathcal{G}_{li}} \langle  x, g(t_i)  \rangle dg   ) \label{TN_eq_1}\\
  &\approxeq \eta( \frac{1}{|\mathcal{G}_{li}|} \sum_{\mathcal{G}_{li}}  \langle  x, g(t_i) \rangle) \label{TN_eq_2}
\end{align}
Here, recall that $\mathcal{G}_{li}$ is a unitary group and $t_i$ is the template for that particular node. Note that Equation~\ref{TN_eq_2} models an average pooled ConvNet exactly for $\eta$ being the hard-ReLU function and $\mathcal{G}_{li}$ being the translation group.  However, the results for the TN node also hold for max pooling. Equation~\ref{TN_eq_2}  is the version in which the group is a discrete finite group. All results also hold for the discrete case. Fig.~\ref{fig_TN_2} illustrates a single channeled TN node observing two support sets.

\begin{figure}
\centering
\includegraphics[width=0.8\columnwidth,valign=m]{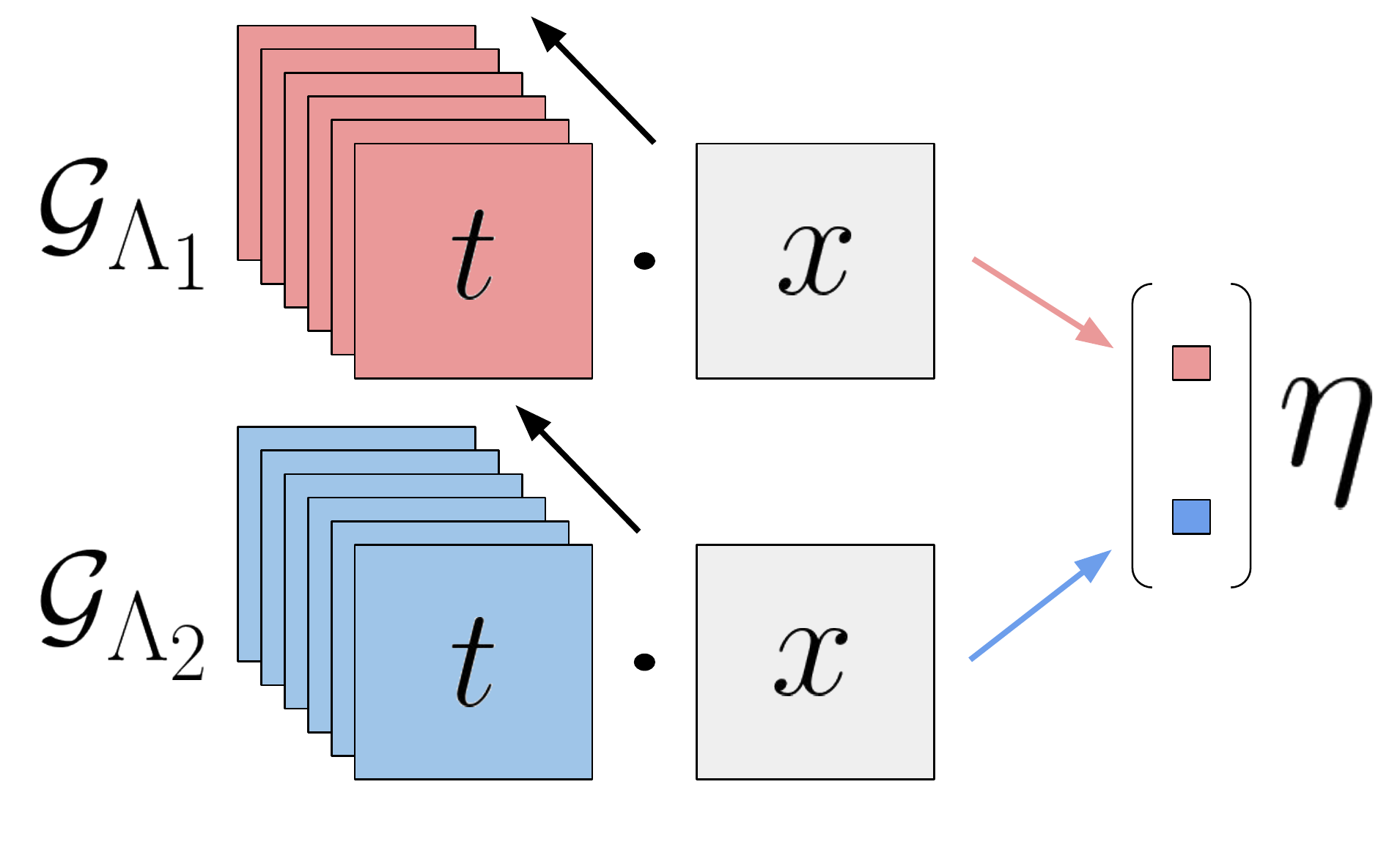}
\centering 
\caption{A single channeled TN node observing two receptive fields denoted by $\Lambda_1, \Lambda_2$. The templates from each support (potentially different) transform according to the groups $\mathcal{G}_{\Lambda_1}, \mathcal{G}_{\Lambda_2}$ (which potentially could be identical, as in ConvNets). The input to the node $x$ has a dot-product with all transformed templates and then integrated over (pooling over the black arrow). Finally, the non-linearity  $\eta(\cdot)$ is applied. }
\label{fig_TN_2}
\end{figure}



\textbf{Learnable Components in a Transformation Network:} The only learnable parameters in a Transformation Network (after the architecture is finalized) are the sets of filters $\mathcal{T}_{li}$  $\forall i$ in each $l$. However, each set has two components to be learnt. 1) The first is the template $t_{li}$, the template for the $i^{th}$ node at layer $l$ (analogous to a feature). 2) The second is the group $\mathcal{G}_{li}$ with which the template $t_{li}$ transforms. Note that once a single template $t_{li}$ is specified along with the corresponding group $\mathcal{G}_{li}$, all transformed templates in the template set $\mathcal{T}_{li}$ are specified. Thus, contrary to convolutional architectures which only learn the filters, Transformation Networks are required to learn both the transformations \textit{and} the filters. Though main focus of this paper are the invariance properties of these networks, we briefly investigate how one could learn a Transformation Network.

\textbf{Unsupervised Learning of a Transformation Network:} In the unsupervised setting, TNs can be trained in a greedy layer-by-layer fashion. The training data is passed through layer 1 of the TN to learn the templates $t_{li}$ and the corresponding transformation groups $\mathcal{G}_{li}$ at the same time. One simple way is to sample the transforming input sequence. Doing so specifies both the templates and the corresponding groups simultaneously. Unsupervised feature learning techniques such as ICA can also be applied. Once layer 1 is trained, layer 1 features can be extracted from the training data before being passed to layer 2 for training the second layer. This process can be repeated until all layers are trained.

\textbf{Supervised Learning of a Transformation Network:} Under the supervised setting, one can assume that gradients are available. It is harder to train under this setting since the gradients need to update each template set or transformation block while keeping its group structure intact. One way of addressing this issue is to assume a particular group structure throughout the TN. This is the exact assumption that ConvNets make. ConvNets model all transformation groups in the network as the translation group which is parametric. The parametric nature allows one to compute the transformed template set on the fly. Thereby the only learnable parameters are the initial templates or filters $t_{li}$. This brings us to the realization that a TN modelling general groups might model invariances better than a ConvNet, an  observation we explore more in the following section. Nonetheless, our main result shows that ConvNets (and TNs in general) can in fact model non-linear invariances.

\section{Invariances in a Transformation Network}
\subsection{Linear Unitary Group Invariance in single layer Transformation Networks} 

We will show that a single layered TN, more specifically a single TN node,  $\Upsilon(x)$ can be invariant to any unitary group $\mathcal{G}$ in the following sense.

\begin{definition}[\textit{$\mathcal{G}$-Invariant Function}]\label{def_invariant}
For any group $\mathcal{G}$, we define a function $f:\mathcal{X} \rightarrow \mathbb{R}^n$ to be $\mathcal{G}$-invariant if $f(x) = f(g(x))~\forall x\in \mathcal{X} ~\forall g\in \mathcal{G}$.
\end{definition}

An invariant to any group $\mathcal{G}$ can be generated through the following (previously) known property utilizing group integration. This is a basic property of groups and arises due to the invariance of the Haar measure $dg$ \footnote{Proof in the supplementary.}.


\begin{lemma}\label{lem_invariance} (Invariance Property) Given a vector $x\in \mathbb{R}^d$, and any group $\mathcal{G}$, for any fixed $ g' \in \mathcal{G}$ and a normalized Haar measure $dg$, the following is true $g'\left(\int_\mathcal{G} g(x)\right) ~dg = \int_\mathcal{G} g (x) ~dg$
\end{lemma}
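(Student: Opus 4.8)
The plan is to reduce the statement to the single defining property of the Haar measure, namely its invariance under (left) translation by group elements. First I would use that each $g'\in\mathcal{G}$ acts on $\mathbb{R}^d$ as a linear operator (for the unitary groups of interest the action is linear, hence bounded), so $g'$ commutes with the vector-valued integral over $\mathcal{G}$:
\begin{align}
g'\!\left( \int_{\mathcal{G}} g(x)\, dg \right) = \int_{\mathcal{G}} g'(g(x))\, dg = \int_{\mathcal{G}} (g'g)(x)\, dg .
\end{align}
This is the only place where linearity of the action is used, and it is what lets us move the transformation $g'$ past the (vector-valued) integral.

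Next I would substitute $h = g'g$. Since $\mathcal{G}$ is a group, $g\mapsto g'g$ is a bijection of $\mathcal{G}$ onto itself, and the normalized Haar measure is by definition invariant under exactly this map, so $d(g'g)=dg$. Hence $\int_{\mathcal{G}} (g'g)(x)\, dg = \int_{\mathcal{G}} h(x)\, dh = \int_{\mathcal{G}} g(x)\, dg$ after renaming the dummy variable, and combining with the previous display gives the claim. For the discrete finite-group form used in Equation~\ref{TN_eq_2}, the argument collapses to the elementary remark that left multiplication by $g'$ merely permutes the terms of $\tfrac{1}{|\mathcal{G}|}\sum_{g\in\mathcal{G}} g(x)$, leaving the sum unchanged.

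The only real obstacle is the measure-theoretic bookkeeping in the continuous case: one must know that $\int_{\mathcal{G}} g(x)\, dg$ exists as a (Bochner) integral and that pulling $g'$ inside it is legitimate. Both are immediate here because the groups are unitary, hence compact with a finite normalized Haar measure, and $g\mapsto g(x)$ is continuous with $\|g(x)\|=\|x\|$ uniformly bounded; boundedness of $g'$ then justifies the interchange. No deeper idea is needed — the lemma is just the invariance of the Haar measure transported to the vector-valued setting, and the discrete case needs nothing beyond re-indexing a finite sum.
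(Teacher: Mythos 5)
Your proof is correct and follows essentially the same route as the paper's: pull $g'$ inside the integral and then absorb it by the left-invariance of the normalized Haar measure (a re-indexing of a finite sum in the discrete case). The only difference is that you explicitly justify the interchange of $g'$ with the vector-valued integral via linearity and boundedness of the unitary action, a step the paper's proof performs silently.
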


\textbf{One layer TN is invariant to unitary transformation groups in the input space:} Consider a TN with just a single layer of TN nodes. Each of these nodes looks at a patch of the same size. Each output feature of the network is given by Eq.~\ref{TN_eq_2}, although to study the properties of such a construction, we will utilize Eq.~\ref{TN_eq_1}. Utilizing Lemma~\ref{lem_invariance} along with the definition of a TN node, we have the following.

\begin{lemma}\label{lem_TN_node_invariance}  (TN node linear Invariance) Under a unitary group $\mathcal{G}$, under the action of which the filters or templates $\mathcal{T}$ of a TN node are transformed, the node output is invariant to the action of $\mathcal{G}$ on the input $x$, \emph{i.e.} $\Upsilon(x) = \Upsilon(g'(x)) ~~\forall g' \in \mathcal{G}, \forall x$.
\end{lemma}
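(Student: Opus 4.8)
The plan is to reduce the statement to Lemma~\ref{lem_invariance} together with the unitarity of $\mathcal{G}$, by commuting the group action with the inner product and then with the pooling integral. Starting from Eq.~\ref{TN_eq_1}, we have $\Upsilon(g'(x)) = \eta\big( \int_{\mathcal{G}} \langle g'(x), g(t) \rangle \, dg \big)$, so since $\eta$ is a fixed point-wise map it suffices to show that the argument of $\eta$ is the same whether we feed in $x$ or $g'(x)$.

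First I would use linearity of the inner product and of the Haar integral to write $\int_{\mathcal{G}} \langle x, g(t) \rangle \, dg = \langle x, \bar t \rangle$ where $\bar t := \int_{\mathcal{G}} g(t) \, dg$ is the group-averaged template; this interchange is legitimate because $\mathcal{G}$ acts by bounded (unitary) operators, $t$ is fixed, and $dg$ is a normalized Haar measure, so the vector-valued integral $\bar t$ exists and the inner product is continuous in its first argument. By exactly the same manipulation, the argument of $\eta$ in $\Upsilon(g'(x))$ equals $\langle g'(x), \bar t \rangle$.

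The crux is then a one-line application of Lemma~\ref{lem_invariance}, which gives $g'(\bar t) = \bar t$ for every $g' \in \mathcal{G}$, i.e. $\bar t$ is a fixed point of the whole group. Combining this with unitarity of $g'$, namely $\langle g'(u), g'(v) \rangle = \langle u, v \rangle$ for all $u, v$, yields $\langle g'(x), \bar t \rangle = \langle g'(x), g'(\bar t) \rangle = \langle x, \bar t \rangle = \int_{\mathcal{G}} \langle x, g(t) \rangle \, dg$. Applying $\eta$ to both sides gives $\Upsilon(g'(x)) = \Upsilon(x)$ for all $g' \in \mathcal{G}$ and all $x$, which is the claim; the identical argument with the integral replaced by a normalized sum covers the discrete case of Eq.~\ref{TN_eq_2}. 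I do not anticipate a genuine obstacle: the only points needing care are the interchange of integral and inner product noted above, and the choice between the ``$\bar t$ is group-fixed'' route (via Lemma~\ref{lem_invariance}) and the change-of-variables route ($g \mapsto g' g$ plus left-invariance of $dg$, using $\langle g'(x), z \rangle = \langle x, g'^{-1}(z) \rangle$); both are routine and give the same conclusion, with the former being shorter since Lemma~\ref{lem_invariance} already does the bookkeeping.
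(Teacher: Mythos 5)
Your proof is correct, and it takes a slightly different route from the paper's. The paper proves the lemma by keeping the scalar integral $\int_{\mathcal{G}} \langle g'(x), g(t)\rangle\,dg$ intact, using unitarity to rewrite the integrand as $\langle x, g'^{-1}(g(t))\rangle$, and then performing the change of variables $g'' = g'^{-1}g$ together with invariance of the Haar measure. You instead pull the integral inside the inner product to form the group-averaged template $\bar t = \int_{\mathcal{G}} g(t)\,dg$, invoke Lemma~\ref{lem_invariance} to conclude that $\bar t$ is a fixed point of every $g'\in\mathcal{G}$, and finish with one application of unitarity. The two arguments rest on exactly the same facts (unitarity plus Haar invariance; Lemma~\ref{lem_invariance} is itself proved by the very change of variables the paper repeats), but yours has two small advantages: it actually uses Lemma~\ref{lem_invariance}, which the paper states as the key ingredient but then does not cite in its own proof, and it exposes the structural reason for invariance, namely that the pooled response is just the linear functional $\langle\,\cdot\,,\bar t\rangle$ against a $\mathcal{G}$-fixed vector. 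Two minor points of hygiene: the interchange of the integral with the inner product happens in the \emph{second} argument (where $g(t)$ sits), not the first as you wrote --- immaterial over $\mathbb{R}^d$ where the inner product is symmetric and bilinear --- and it also implicitly uses that the $g$ act linearly, which holds since they are unitary operators.
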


The proof is provided in the supplementary.  This result shows that the TN node is invariant to local \textit{linear} transformations (locality depending on the size of the receptive field). There are two main properties of the unitary group which allow for such invariance of the input. First, the group structure itself allows for invariant to be computed through group integration. Secondly, the unitary property of each element allows for the transformation to be ``transferred" from the template $t$ to the input $x$ \emph{i.e.} $\langle x, g(t)   \rangle = \langle g^{-1}(x), t   \rangle$. Thus, \textit{integrating over $g(\cdot)$ is equivalent whether we compute this over input or the template}. Transformation Networks compute this integration over the pre-transformed templates, thereby computing an invariant feature of $x$ even though it has never observed any other transformation of $x$.  The unitarity of the transformations allows us to be invariant to the transformed versions of the input even though we might have never observed them in training. 


In the following section, we show that under certain conditions that are very closely approximated in practice, exact invariance can be achieved to non-linear transformations of the input as well. This is a fundamental problem in generalized (supervised and unsupervised) deep learning. Specifically, how does a deep feed-forward network generate invariance to the highly non-linear transformations in data? Much of the attention for the answer to this question has gone to learnable features. We find that the inherent structure of the network itself (such as in ConvNets) is ideal to invoke invariance. In our group theoretic framework, these "features" or filter weights would  be the point from which the transformed filters are generated \emph{i.e.} $x$ in $g(x)$. 


\subsection{Non-linear Activation in Transformation Networks}
In the case of a TN with 2 or more layers, the non-linear activation function (under certain conditions) can help in generating invariance to \textit{non-linear} transformations in the input space. In order to show this, we first show that under the unitary condition, such a non-linear activation can preserve the unitary group structure in the range space of the function \emph{i.e.} the unitary transformation in the input domain of the non-linear activation function is also a corresponding albeit different unitary transformation in the range of the function. This unitary group structure is observed by TN nodes downstream (higher up the layers), which then through group integration to be able to generate invariance to the same utilizing Lemma~\ref{lem_TN_node_invariance}. 

\begin{figure}
\centering
\includegraphics[width=0.8\columnwidth,valign=m]{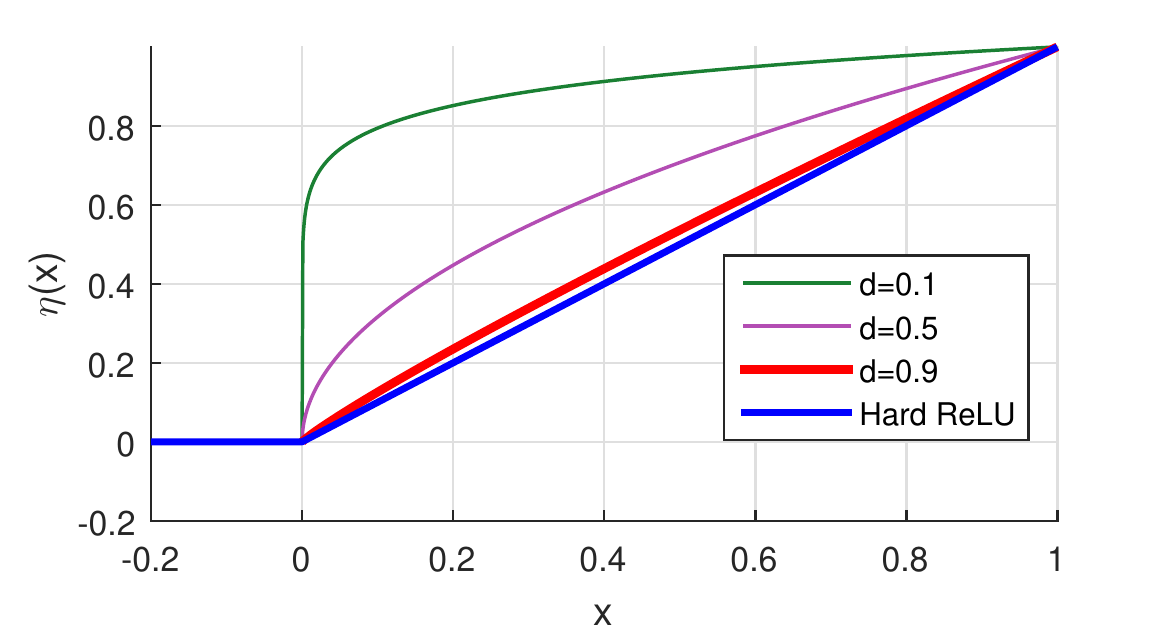}
\centering 
\caption{ Non-linear activation function satisfying conditions of unitarity and stability. Here $\eta(x) = x^d$ is plotted for different values of $d = \{ 0.1, 0.5, 0.9 \}$, however only $0.9 \leq d <1$ approximately satisfy stability. Hard ReLU $=\max(0, x)$, which is prevalent in practical deep learning, is plotted for comparison. In the 1-dimensional case, $\eta(x) = x^{d}$ for $0.9 \leq d <1$ which satisfies Condition 1 (unitarity) exactly and is a very close approximation of Condition 2 (stability) and hard ReLU. }
\label{fig_activation}
\end{figure}

\textbf{Conditions on the non-linear activation function:}  We now state the conditions on the non-linear activation function $\eta(\cdot)$.
\begin{enumerate}
    \item \textbf{Condition 1: }\textit{(Unitarity)} We define a function $\eta: \mathcal{X} \rightarrow \mathcal{H}$ to be a unitary function if, for a unitary group $\mathcal{G}$, it satisfies $\langle \eta(g(x)), \eta(g(y))\rangle = \langle \eta(x), \eta(y)\rangle~\forall g\in \mathcal{G}, \forall x, y \in \mathcal{X}$.\label{def_unitary_kernel}


\item \textbf{Condition 2: }\textit{(Stability)} We define a function $\eta: \mathcal{X} \rightarrow \mathcal{H}$ to be stable if $\eta \circ \eta (x) = \eta( \eta( x ) ) = \eta(x) ~~ \forall x \in \mathcal{X}$.\label{def_stable_function}

\end{enumerate}


Many functions prevalent in machine learning are unitary in the sense of Condition~\ref{def_unitary_kernel}. One example is the class of polynomial kernels $k(x, y) = (x^Ty + c)^d$. Since the kernel employs an actual dot-product, it is clear that the function is unitary. The activation function of interest $\eta(\cdot)$ in this case would be the non-linear implicit map that the kernel defines from the input space to the Reproducing Kernel Hilbert Space (RKHS) (\emph{i.e.} $k(x,y) = \langle \eta(x), \eta(y)  \rangle$). For an example of a function that is stable in the sense of Condition~\ref{def_stable_function}, we consider Rectified Linear Units or the hard ReLU activation function ($\max(0,x)$) which is prevalent in deep learning \cite{nair2010rectified}. Note that both conditions of unitarity (Condition 1) and stability (Condition 2) need to be satisfied by the activation function $\eta(\cdot)$. We find such a class of non-linear functions in the implicit kernel map of the polynomial kernel $k(x,y) = \langle \eta(x), \eta(y)  \rangle = \langle x, y  \rangle^d$ with $d$ strictly less than but close to 1 \emph{i.e.} $d\rightarrow 1, d \neq 1$. Although not prevalent, such kernels are valid \cite{rossius1998short}. These functions exactly unitary and are approximately stable ($d$ being arbitrarily close to 1 but not equal) for the range of values typical in activation functions.  For the 1-D case with $d=0.9$, $\eta(x) = x^{0.9}$. Restricting the function to produce only real values, it rejects all negative values in its domain. This behavior is a very close approximation of the hard rectified linear unit \emph{i.e.} $\max(0, x)$ as illustrated in Fig.~\ref{fig_activation}.

\textbf{Group structure is preserved in the range of $\eta(\cdot)$:} One of our central results is that group invariance can be invoked through group integration in the non-linear feature space as well. This is the crux of the invariance generation capabilities of ConvNets and Transformation Networks in general. 

We define an operator $g_\eta:\eta(x) \rightarrow \eta(g(x))$ for any $g\in \mathcal{G}$ where $\mathcal{G}$ is unitary. $g_\eta$ is thus a mapping within $\mathcal{H}$ (the range of $\eta(\cdot)$). Under unitary $\mathcal{G}$, we then have the following result.
\begin{theorem}\label{theorem_1}
(Covariance in the range of $\eta(\cdot)$) If $\eta(\cdot) $ is a unitary function in the sense of Definition~\ref{def_unitary_kernel}, then  $g_\eta$ is unitary, and the set $\mathcal{G}_\eta = \{ g_\eta ~|~ g_\eta: \eta(x)\rightarrow \eta(g(x))~\forall g\in \mathcal{G} \}$ is a unitary-group in $\mathcal{H}$. This implies $\eta(g(x)) = g_\eta (\eta(x)) ~~\forall x$ with $g_\eta$ being unitary.
\end{theorem}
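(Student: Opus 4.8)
The plan is to unpack the three assertions bundled into the statement and verify them in order: (i) $g_\eta$ is a well-defined map on $\eta(\mathcal{X})$; (ii) each $g_\eta$ preserves inner products, hence is unitary in the sense used in the paper; (iii) the collection $\mathcal{G}_\eta$ is closed under composition, associative, and contains an identity and inverses, i.e.\ it is a group. The concluding identity $\eta(g(x)) = g_\eta(\eta(x))$ is then nothing more than the defining equation of $g_\eta$.

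First I would establish well-definedness, which is the only step carrying real content. The point is that $\eta$ need not be injective, so to declare $g_\eta\colon \eta(x)\mapsto \eta(g(x))$ a function one must check that $\eta(x)=\eta(y)$ forces $\eta(g(x))=\eta(g(y))$. For this I expand $\|\eta(g(x))-\eta(g(y))\|^2$ using bilinearity of the inner product and apply the unitarity condition (Condition~\ref{def_unitary_kernel}) term by term, rewriting it as $\|\eta(x)-\eta(y)\|^2$; if $\eta(x)=\eta(y)$ this vanishes, so $\eta(g(x))=\eta(g(y))$. Hence $g_\eta$ is a genuine function on $\eta(\mathcal{X})$.

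Next I would check unitarity of $g_\eta$: by definition $\langle g_\eta(\eta(x)), g_\eta(\eta(y))\rangle = \langle \eta(g(x)),\eta(g(y))\rangle$, and this equals $\langle \eta(x),\eta(y)\rangle$ directly by Condition~\ref{def_unitary_kernel}. If one wants $g_\eta$ to be a bona fide unitary operator on a Hilbert subspace rather than merely an inner-product-preserving map on the subset $\eta(\mathcal{X})$, I would remark that inner-product preservation lets one extend $g_\eta$ linearly and isometrically to $\mathrm{span}\,\eta(\mathcal{X})$ — any linear dependence among the $\eta(x_i)$ maps to one among the $\eta(g(x_i))$ by the same norm computation — and then continuously to the closure, with surjectivity onto that subspace coming from $g^{-1}$ supplying preimages.

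Finally, for the group structure I would compute $(g_\eta\circ h_\eta)(\eta(x)) = g_\eta(\eta(h(x))) = \eta(g(h(x))) = (gh)_\eta(\eta(x))$, so $g_\eta\circ h_\eta = (gh)_\eta \in \mathcal{G}_\eta$, giving closure and showing $g\mapsto g_\eta$ is a homomorphism; associativity is inherited from $\mathcal{G}$ (or from composition of maps); $e_\eta = \mathrm{id}$ on $\eta(\mathcal{X})$ since $e(x)=x$; and $(g^{-1})_\eta$ inverts $g_\eta$ because their composite is $e_\eta$. Thus $\mathcal{G}_\eta$ is a group all of whose elements preserve the inner product, i.e.\ a unitary group in $\mathcal{H}$. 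The main — and essentially the only — obstacle is the well-definedness argument in the first step; once Condition~\ref{def_unitary_kernel} is in hand, everything else is bookkeeping.
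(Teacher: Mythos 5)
Your proposal is correct and its core computations coincide with the paper's: unitarity of $g_\eta$ falls straight out of Condition~\ref{def_unitary_kernel}, and closure follows from $g_\eta\circ g'_\eta = (gg')_\eta$ exactly as in the paper's proof. Two points where you diverge are worth noting. First, you open with a well-definedness check --- that $\eta(x)=\eta(y)$ forces $\eta(g(x))=\eta(g(y))$, via expanding $\|\eta(g(x))-\eta(g(y))\|^2$ --- which the paper skips entirely; since $\eta$ is not assumed injective, this is a genuine gap in the paper's argument that you correctly identify as the step carrying real content. Second, you handle linearity by extending the inner-product-preserving map from the set $\eta(\mathcal{X})$ to its linear span and closure, whereas the paper tries to prove linearity of $g_\eta$ directly by computing $\|\alpha g_\eta(p)-g_\eta(\alpha p)\|^2=0$; the paper's route tacitly assumes that $\alpha p$ and $p+q$ lie in the domain $\eta(\mathcal{X})$ of $g_\eta$, which is not guaranteed, so your extension-to-the-span argument is the cleaner and more defensible way to obtain a bona fide unitary operator on a subspace of $\mathcal{H}$. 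In short: same skeleton, but your version patches two soft spots in the paper's proof.
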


Theorem~\ref{theorem_1} shows that the unitary group transformation in the input space of a TN node can be expressed as a unitary group transformation in the range or feature space of the node \footnote{ Proof in the supplementary.}. Since the group structure is preserved in the non-linear space, unitary group integration allows for transformation invariance of the input.

\subsection{Non-linear Group Invariance in multi-layer Transformation Networks}

\textbf{Analysis of a two-layered TN:} Consider a simple 2 layered TN with four TN nodes in layer 1 $\{ \Upsilon_{11}, \Upsilon_{12}, \Upsilon_{13}, \Upsilon_{14}  \}$ each looking at non-overlapping patches of the raw input image. Let one TN node $\Upsilon_{2}$ at layer 2 be receiving the spatially concatenated input from layer 1 as shown in Fig.~\ref{fig_TN_1}. Let all nodes be single channel nodes with the templates and their corresponding unitary groups as $\{(t_{11}, \mathcal{G}_{11}), (t_{12}, \mathcal{G}_{12}), (t_{13}, \mathcal{G}_{13}), (t_{14}, \mathcal{G}_{14}), (t_{2}, \mathcal{G}_{2}) \}$ \footnote{ Weight sharing would imply the same templates and groups. Our results are not affected by that constraint.}. The transformed templates were learnt according to the unsupervised learning protocol or according to a supervised learning protocol which \textit{preserves the group structure of each template set}. Since layer 1 TN nodes are already invariant to unitary groups $\{ \mathcal{G}_{11}, \mathcal{G}_{12}, \mathcal{G}_{13}, \mathcal{G}_{14}  \}$, the only transformation that layer 2 nodes might observe are the ones that are not captured by these groups. $ \mathcal{G}_{2}$ is a transformation group that is unitary over the support $\Lambda_2$ which has a receptive field that is a union of $\{ \Lambda_{11},\Lambda_{12},\Lambda_{13},\Lambda_{14}\}$. $\mathcal{G}_2$ is \textit{not} necessarily unitary over the individual supports (receptive field defined by $\Lambda_{1j}$) of layer 1 nodes.


The output of the layer 1 nodes for a single channel with templates $\{ t_{1i} \}$ are of the form $\Upsilon_{1i}(x_{\Lambda_{1i}}) = \eta(  \int_{\mathcal{G}_{1i}} \langle  x_{\Lambda_{1i}}, g(t_{1i})  \rangle dg)  =  \eta(  I(x_{\Lambda_{1i}}) ) $, where we replaced the group integral over the dot-product with $I(\cdot)$ to emphasize an invariant feature. The output for the layer 2 node $\Upsilon_2$ for a single channel with template $t_2$ is as shown below. $$\Upsilon_2( x ) =  \eta\left( \int_{\mathcal{G}_2}  \langle x,  \eta(g(t_2))  \rangle dg \right)$$ Note that since the templates are learnt in an unsupervised fashion by passing in input through the previous layers, the transformed template $t_2$ is of the form $\eta(g(t_2))$. This non-linearity appears due to the non-linear activation function of the previous layer. Even in the case of learning by back-propagation, weights pass through multiple non-linearities resulting in similar forms for the templates. For a two layered TN network, the second layer TN node expression is of the following form. $$\Upsilon_2( o_1 ) =  \eta\left( \int_{\mathcal{G}_2}  \langle \left[ \begin{array}{c} \eta(I(x_{\Lambda_{11}})) \\ \eta(I(x_{\Lambda_{12}})) \\ \eta(I(x_{\Lambda_{13}})) \\ \eta(I(x_{\Lambda_{14}}))  \end{array} \right],  \eta(g(t_2))  \rangle dg \right)$$where $o_1 = [\Upsilon_{11}(x_{\Lambda_{11}}), \Upsilon_{12}(x_{\Lambda_{12}}),  \Upsilon_{13}(x_{\Lambda_{13}}),  \Upsilon_{14}(x_{\Lambda_{14}})]^T$. Replacing the concatenated invariant feature vector $ [ I(x_{\Lambda_{11}}),  I(x_{\Lambda_{12}}), I(x_{\Lambda_{13}}), I(x_{\Lambda_{14}}) ]^T  = x'_{\Lambda_2}$ and applying the point-wise non-linearity over the entire vector,  the layer 2 feature output becomes,\begin{align}
    \Upsilon_2(\eta(x'_{\Lambda_2})) &=  \eta\left( \int_{\mathcal{G}_2}  \langle \eta( x'_{\Lambda_2} ),  \eta(g(t_2))  \rangle  dg  \right)
\end{align}
Here the receptive field of $x'_{\Lambda_2}$ is the union of the receptive fields of the four $\Upsilon_{1i}$. Since layer 1 features are invariant to the respective groups, variation in $x'_{\Lambda_2}$ occurs only due to transformations that do not fall into the groups modelled by layer 1 nodes. However, in order for group integration to be applied at layer 2, the transformation needs to be propagated or be covariant  in the layer 1 feature space. We express this formally through a property which was previously shown to be true for hierarchical architectures employing group integrals \cite{anselmi2013unsupervised}.

\begin{property}\label{conjecture_1}
(Covariance in the TN node feature space) Given a unitary $g_{\Lambda_2}$ over $\Lambda_2$ that is not modelled by the unitary groups in layer 1 i.e. $\{\mathcal{G}_{11}, \mathcal{G}_{12}, \mathcal{G}_{13}, \mathcal{G}_{14}   \}$, $\exists g'_{\Lambda_2}$ s.t. $$   \left[ \begin{array}{c} I( g_{\Lambda_2|\Lambda_{11}} (x_{\Lambda_{11}})) \\ I(g_{\Lambda_2|\Lambda_{12}} (x_{\Lambda_{12}})) \\ I(g_{\Lambda_2|\Lambda_{13}} (x_{\Lambda_{13}})) \\ I(g_{\Lambda_2|\Lambda_{14}} (x_{\Lambda_{14}}))  \end{array} \right]  =  g'_{\Lambda_2} (x'_{\Lambda_2} ) $$where $ g_{\Lambda_a|\Lambda_{b}} $ is the transformation $g_{\Lambda_a}$ restricted to the support $\Lambda_b$ and $ [ I(x_{\Lambda_{11}}),  I(x_{\Lambda_{12}}), I(x_{\Lambda_{13}}), I(x_{\Lambda_{14}}) ]^T  = x'_{\Lambda_2}$
\end{property}

This property allows for a unitary transformation acting on the support $\Lambda_2$ in the input space to have a corresponding action or effect in the feature space. For instance, if one considers an in-plane rotation over a $16\times 16$ image, then a $2\times 2$ pooling (which is essentially feature extraction with the identity template) of pixels still preserves the rotation to a large degree. Feature extraction with general templates will also preserve the transformation due to the linearity of the dot-product.



Applying Property~\ref{conjecture_1} to the layer 2 features, we have for any transformed $g'_{\Lambda_2}(x'_{\Lambda_2})$ having support over $\Lambda_2$, 
\begin{align}
    &\Upsilon_2(\eta \circ g'_{\Lambda_2}( x'_{\Lambda_2}) )\\ 
    &=  \eta\left( \int_{\mathcal{G}_2}  \langle \eta \circ  g'_{\Lambda_2}  (x'_{\Lambda_2} ),  \eta \circ g(t_2)  \rangle  dg  \right)\\
    &=  \eta\left( \int_{\mathcal{G}_{\eta 2}}  \langle g'_{\eta\Lambda_2} \circ  \eta(  x'_{\Lambda_2} ),  g_\eta \circ  \eta(t_2)  \rangle  dg_{\eta}  \right) \label{eq_2_proof}\\
    &=  \eta\left( \int_{\mathcal{G}_{\eta 2}}  \langle  \eta(  x'_{\Lambda_2} ),  (g'_{\eta\Lambda_2})^{-1}\circ  g_\eta \circ  \eta(t_2)  \rangle  dg_{\eta}   \right)\\ 
     &=  \eta\left( \int_{\mathcal{G}_{\eta 2}}  \langle  \eta(  x'_{\Lambda_2} ),   g_\eta\circ \eta(t_2)  \rangle  dg_{\eta}   \right)\label{eq_4_proof} \\ 
    &= \Upsilon_2(\eta(x'_{\Lambda_2}))
\end{align}
Equation~\ref{eq_2_proof} utilizes Theorem~\ref{theorem_1} and Equation~\ref{eq_4_proof} utilizes the fact that since the templates are considered to be modelled using the same transformation model as the raw input images due to training (thereby layer 2 always observes the same group structure $(g'_{\eta\Lambda_2})^{-1} \in \mathcal{G}_{\eta 2}$). This implies that $(g'_{\eta\Lambda_2})^{-1} g_\eta$ forms a bijective mapping to  some $g'_\eta \in \mathcal{G}_{\eta 2}$ and thus the transformation results in a rotation of the group elements or a reordering of the group\footnote{Here the identity element maps to $(g'_{\eta\Lambda_2})^{-1}$ and $g'_{\eta\Lambda_2}$ maps to the identity}. The group essentially is invariant and hence the group integral does not change. We therefore arrive at the layer 2 \textit{linear} invariance expression, which is
\begin{align}
    \Upsilon_2(\eta(g_{\Lambda_2} x'_{\Lambda_2}) ) = \Upsilon_2(\eta(x'_{\Lambda_2})) \label{eq_non_linear_inv}
\end{align}
Here $x'_{\Lambda_2}$ is simply the activation response or output of the layer 1 TN nodes. Intuitively, \textit{ layer 2 TN node is invariant to linear transformations over the larger support $\Lambda_2$ in the input space}. The invariance expression of $\Upsilon_2$ however, is coupled with the non-linearity $\eta$ from the previous layer (\emph{i.e.} layer 1). This coupling is what allows the node to model more general non-linear invariances as we will see shortly. In order to highlight the invariance specifically, we rewrite the invariance expression for a general $x$ and unitary group element $g$ and replacing $\Upsilon_2(\eta(\cdot)) = \Gamma_2(\cdot)$ where with 2 denotes layer 2 node. Therefore, we have $$\Gamma_2(g (x)) = \Gamma_2(x)$$  Further, it is interesting to note that if $x$ itself is a non-linear transformation of some $x'_{\Lambda_2}$, \emph{i.e.} $x = \eta \circ g'_{\Lambda_2} (x'_{\Lambda_2})$, we then arrive at our main result.



\begin{theorem}\label{theorem_non_linear_inv}
(Two-layer TN node Non-linear Invariance) Under a unitary group $\mathcal{G}_{\Lambda_2}$ acting on the support $\Lambda_2$, the output of the second layer node $\Upsilon_2(\eta(\cdot)) = \Gamma_2(\cdot)$ covering the support $\Lambda_2$, is invariant to the action or transformations of $\eta \circ  \mathcal{G}_{\Lambda_2}$ on any input $x'_{\Lambda_2}$, \emph{i.e.} $$\Gamma_2(x'_{\Lambda_2}) = \Gamma_2(\eta \circ g'_{\Lambda_2}(x'_{\Lambda_2})) ~~\forall g'_{\Lambda_2} \in \mathcal{G}_{\Lambda_2}, \forall x'_{\Lambda_2}$$ for $\eta(\cdot)$ satisfying the conditions of unitarity and stability.
\end{theorem}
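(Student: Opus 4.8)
The plan is to reduce the claim to the \emph{linear} invariance identity already derived in Equation~\ref{eq_non_linear_inv}, using the stability condition (Condition~\ref{def_stable_function}) to absorb the single extra application of $\eta$ that distinguishes the two statements. First I would unfold the abbreviation $\Gamma_2(\cdot)=\Upsilon_2(\eta(\cdot))$ at the transformed input: $\Gamma_2(\eta\circ g'_{\Lambda_2}(x'_{\Lambda_2})) = \Upsilon_2\big(\eta\circ\eta\circ g'_{\Lambda_2}(x'_{\Lambda_2})\big)$. Since $\eta$ is stable, $\eta\circ\eta=\eta$, so this collapses to $\Upsilon_2\big(\eta\circ g'_{\Lambda_2}(x'_{\Lambda_2})\big)$, which is exactly the left-hand side of the linear invariance identity~\ref{eq_non_linear_inv} (with the feature-space transform $g'_{\Lambda_2}$ acting on the layer-1 output $x'_{\Lambda_2}$). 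Invoking that identity gives $\Upsilon_2\big(\eta\circ g'_{\Lambda_2}(x'_{\Lambda_2})\big) = \Upsilon_2(\eta(x'_{\Lambda_2})) = \Gamma_2(x'_{\Lambda_2})$, which is the claim.

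The substantive content is therefore the chain of equalities establishing Equation~\ref{eq_non_linear_inv}, which I would carry out in the following order: (i) use Property~\ref{conjecture_1} to transfer the action of the raw-support transform $g_{\Lambda_2}\in\mathcal{G}_{\Lambda_2}$ onto the concatenated layer-1 invariant feature vector $x'_{\Lambda_2}$, producing the corresponding feature-space unitary $g'_{\Lambda_2}$; (ii) apply Theorem~\ref{theorem_1} to both arguments of the inner product inside $\Upsilon_2$, replacing $\eta\circ g'_{\Lambda_2}(x'_{\Lambda_2})$ by $g'_{\eta\Lambda_2}\circ\eta(x'_{\Lambda_2})$ and the transformed template $\eta\circ g(t_2)$ by $g_\eta\circ\eta(t_2)$, with $g'_{\eta\Lambda_2},g_\eta\in\mathcal{G}_{\eta 2}$ unitary; (iii) move $g'_{\eta\Lambda_2}$ across the inner product as $(g'_{\eta\Lambda_2})^{-1}$ using unitarity; (iv) because the learned template set is modelled with the same group structure as the input, $(g'_{\eta\Lambda_2})^{-1}\in\mathcal{G}_{\eta 2}$, so $(g'_{\eta\Lambda_2})^{-1}g_\eta$ traverses all of $\mathcal{G}_{\eta 2}$ as $g_\eta$ does; invariance of the normalized Haar measure (Lemma~\ref{lem_invariance}) then leaves the group integral unchanged, returning $\Upsilon_2(\eta(x'_{\Lambda_2}))$. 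Composing with the first paragraph gives the result for all $x'_{\Lambda_2}$ and all $g'_{\Lambda_2}\in\mathcal{G}_{\Lambda_2}$.

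I expect step (iv) to be the main obstacle: one has to be precise that $(g'_{\eta\Lambda_2})^{-1}g_\eta$ defines a measure-preserving bijection of $\mathcal{G}_{\eta 2}$ onto itself, which is exactly where the assumption that the template set carries the same unitary group structure as the input is indispensable, and where Lemma~\ref{lem_invariance} must be applied carefully rather than informally. Secondary points I would be explicit about: step (i) is valid only under Property~\ref{conjecture_1}, so the theorem is conditional on that covariance assumption; and the stability identity $\eta\circ\eta=\eta$ holds only approximately for the concrete choice $\eta(x)=x^d$ with $d\to 1$, so for that activation the invariance purchased is approximate, whereas the theorem as stated is exact under the idealized Conditions~\ref{def_unitary_kernel} and~\ref{def_stable_function}.
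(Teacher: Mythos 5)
Your proof is correct and follows essentially the same route as the paper's: unfold $\Gamma_2=\Upsilon_2\circ\eta$, collapse $\eta\circ\eta$ to $\eta$ via stability, and invoke the linear invariance identity of Equation~\ref{eq_non_linear_inv} (whose derivation via Property~\ref{conjecture_1}, Theorem~\ref{theorem_1}, and Haar-measure invariance you also reproduce faithfully). Your explicit caveats --- that the result is conditional on Property~\ref{conjecture_1} and that stability is only approximate for $\eta(x)=x^{d}$, $d\to 1$ --- are accurate and worth keeping.
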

\begin{proof} We have,
\begin{align}
    \Gamma_2(\eta\circ g'_{\Lambda_2} (x'_{\Lambda_2})) &=  \Upsilon_2(\eta \circ  \eta \circ g'_{\Lambda_2}(x'_{\Lambda_2}  ))\\ &= \Upsilon_2( \eta\circ g'_{\Lambda_2}(x'_{\Lambda_2})  )\\ &= \Upsilon_2( \eta(x'_{\Lambda_2})  ) = \Gamma_2(x'_{\Lambda_2})
\end{align}
The second equality utilizes the stability property of $\eta(\cdot)$ whereas the third equality arises from the invariance of $\Upsilon(\eta(\cdot))$ as demonstrated in Equation~\ref{eq_non_linear_inv}.
\end{proof}

This shows that a TN node at layer 2 is invariant to a non-linear transformation $\eta\circ  g'_{\Lambda_2}$ of any $x'_{\Lambda_2}$ over the support $\Lambda_2$. Combining the invariance generated due to the first layer as well, the node overall is invariant to the non-linear transformation $g_{\Lambda_{11}}\circ g_{\Lambda_{12}}\circ g_{\Lambda_{13}}\circ g_{\Lambda_{14}}\circ \eta \circ  g_{\Lambda_{2}}(\cdot)$. More specifically, the four layer 1 TN nodes are invariant to the first 4 group elements in the sequence and the second node is invariant to the last element combined with the non-linearity. This result can be extended to multiple layers directly.

\textbf{Rich non-linear invariance in the case of a multi-layered TN:} Our result can be naturally extended  to multi-layered TNs. Consider a TN with $L$ layers with $k_l$ non-overlapping receptive fields at layer $l$. The $i^{th}$ node at layer $l$ observes a receptive field $\Lambda_{li}$. Assuming the last layer $L$'s receptive field covers the entire image, the $L^{th}$ layer node is invariant to the following class of transformation $\mathcal{G}_L= g_{\Lambda_{11}}\circ ...\circ g_{\Lambda_{1k_1}}\circ \eta \circ   g_{\Lambda_{l1}}\circ ...\circ g_{\Lambda_{lk_l}}\circ \eta \circ g_{\Lambda_{L}}\circ (\cdot)$. One can rewrite the form as  $\mathcal{G}_L= g_{\Lambda_{1}} \eta \circ ...  g_{\Lambda_{l}} ...\circ \eta \circ g_{\Lambda_{L}}\circ (\cdot)$ where we collapse all unitary transforms in a layer into one variable. This class of transformations contains $L-1$ non-linearities and is extremely rich. The \textit{structure} of a TN itself along with unitary group modelling and a special class of non-linearities allow for generating invariance to such a large class of transformations of the input.

\textbf{Hierarchy helps in efficient invariance generation:} Consider the class of transformations  $h(x) =  g_1\circ \eta \circ g_2 ...\eta \circ g_L(x)$ that a $L$ layered ConvNet is invariant to. Using a naive single layered approach to be invariant to $h(x)$, one would need to generate all transforms modelled by $h$ and integrate over them. If $g_i\in \mathcal{G}_i$ for a finite group $\mathcal{G}_i$ with the cardinality  $|\mathcal{G}_i|$, then the size of all possible $h(x)$ is of the form $\prod_{j=1}^L |\mathcal{G}_j| $. If all individual groups  have the same cardinality $|\mathcal{G}|$, then the number of transformations is of the order $|\mathcal{G}|^L$. However, with a hierarchical architecture that generates invariance to the individual groups at every layer, the machine only needs to integrate over $|\mathcal{G}_i|$ transforms at layer $i$.  The total number of transforms needed to be integrated over becomes $\sum_{j=1}^L |\mathcal{G}_j|$. Under the assumption that all groups are the same size, the total number becomes $L|\mathcal{G}|$. This is a significant reduction from $\mathcal{O}(|\mathcal{G}|^L)$ to $\mathcal{O}(|\mathcal{G}|)$, by an order of $L$. Even though deeper networks require more data to train well, they can generate invariance to more complicatedly transformed data more efficiently. Further, lower layers having a smaller receptive field helps since cardinality of the transformation groups acting on smaller sized input is lower than those for a larger sized input. This helps the network in factorizing transformations with smaller less complicated transformations before deadling with larger more complicated non-linear ones.

\subsection{The need for multiple templates or channels} 

Up until now in our analysis, we have assumed that the TN nodes have a single channel or a single template. The  feature at layer 2 and above was multi-dimensional merely due to the distinct support sets $\Lambda$ over the image. Our results however extend naturally to multiple channels with multiple templates since we make no assumption regarding the relation between the templates. \cite{anselmi2013unsupervised} suggest the need for multiple templates as a way of better measuring the invariant probability distribution (over pixels) of a group of transformed images. Indeed, the quantity $\langle g(x), t   \rangle~~\forall g\in \mathcal{G}$ is a 1-D projection along $t$ of the distribution of the set $\{ g(x) \}~~\forall g \in \mathcal{G}$. More the number of templates or channels, better the estimate of the probability distribution due to the Cramer-Wold's theorem. This result also holds true for our framework since the dot-products in a TN are 1-D projections of the transformed data onto a TN node template. The reason that this probability distribution is important is because \cite{anselmi2013unsupervised} show that it itself is an invariant to the action of the group $\mathcal{G}$. Therefore, moments of the distribution are also invariant including the first (leading to mean pooling) and infinite moment (leading to max pooling). Group integrals can be seen as measuring the first moment. Thus our results can be integrated with theirs almost seamlessly.









\section{Towards Convolutional Architectures}

Our framework for Transformation Networks models the transforming templates in each TN node as unitary groups. In order to apply supervised learning or back-propagation to these architectures, one must address the crucial issue of maintaining group structure in the template sets while optimizing the templates themselves. If back-propagation is applied naively to all templates in a template set, assuming they start with a group structure intact, they will converge to the same template throughout the set and the group structure will be lost. One way of addressing this issue is to assume all groups in the TN to be \textit{identical} and \textit{parametric}. A parametric transformation that can be efficiently applied would allow us to explicitly generate the template set on the fly for pooling or group integration. In doing so, back-propagation needs to only update one of the templates in each template set. This is because the group structure is explicitly maintained by applying the parametric transformations to that template to generate rest of the template set. ConvNets adopt this exact approach with the transformation of choice being translation since translations can be efficiently implemented during run-time as convolutions. 

Our results apply directly to ConvNets since they are simply TNs instantiated with the unitary groups being discrete translations. We therefore find that the \textit{architecture} of a ConvNet itself allows it to be able to model non-linear transformations. The weight sharing property of ConvNets (leading to convolutions), originally meant for regularization or merely local translation invariance, therefore has a very powerful \textit{by-product of generating invariance to much more complicated non-linear transforms overall}. \cite{goodfellow2009measuring} has studied the problem of visualizing and measuring these invariances generated by a ConvNet and provided empirical justifications for increasing depth. Recall that given a ConvNet with $L$ layers, it is invariant to a class of transformations with at least $L-1$ non-linearities. By increasing the depth of a ConvNet, we are essentially adding in a layer of non-lnearity in the class of transformations that the ConvNet can be invariant towards.  This provides a theoretical justification for the well known fact that depth can improve performance of a ConvNet.

\section{Conclusion}

We have shown that TNs (and thereby ConvNets) are can be invariant to non-linear transformations of the input despite pooling over mere local unitary transformations. We also showed that deeper networks are able to model much richer classes of transformations. Further, we find that a hierarchical architecture allows the network to generate invariance much more efficiently than a non-hierarchical network. 




\bibliography{iclr2017_conference}
\bibliographystyle{iclr2017_conference}

\section{Proofs of theoretical results}

All group theoretic resuts hold true for finite groups as well.

\subsection{Proof of Lemma 3.1}
\begin{proof}We have,
$$g' \left(\int_\mathcal{G} g (x) ~dg\right) = \int_\mathcal{G} g'\circ g (x) ~dg =  \int_\mathcal{G} g''(x) ~dg'' = \int_\mathcal{G} g (x) ~dg$$

Since the normalized Haar measure is invariant, \emph{i.e.} $dg = dg'$. Intuitively, $g'$ simply rearranges the group integral owing to elementary group properties.
\end{proof}

\subsection{Proof of Lemma 3.2}
\begin{proof}We have,
 \begin{align}
 \Upsilon(g'(x)) &= \eta\left(  \int_{\mathcal{G}} \langle  g'(x), g(t)  \rangle dg   \right)\label{eq_1}  \\
 &= \eta\left(  \int_{\mathcal{G}} \langle  x, g'^{-1}(g(t))  \rangle dg   \right)\label{eq_2}\\
 &= \eta\left(  \int_{\mathcal{G}} \langle  x, g''(t)  \rangle dg''   \right)\label{eq_3}\\
 &=  \Upsilon(x)
 \end{align}
 
 Eq.~\ref{eq_2} uses the fact that $g'\in \mathcal{G}$ is unitary.  Eq.~\ref{eq_3} showcases a change of variable. Since $g', g\in \mathcal{G}$, therefore $g''\in \mathcal{G}$. Further $dg=dg''$ since the Haar measure is unitary.
 
\end{proof}

\subsection{Proof of Theorem 3.3}
\begin{proof}
We have $\langle \eta(gx), \eta(gy)\rangle = \langle \eta(x), \eta(y)\rangle = \langle g_\eta \eta(x), g_\eta \eta(y)\rangle$, since the function $\eta$ is unitary. We define $g_\eta \eta(x)$ as the action or transformation of $g_\eta $ on $\eta(x)$. This is one of the requirements of a unitary operator, however $g_\eta$ needs to be linear. Linearity of $g_\eta $ can be derived from the linearity of the inner product and its preservation under $g_\eta$ in $\eta$. For an arbitrary vector $p$ and a scalar $\alpha$, we have 
\begin{align}
&|| \alpha g_\eta(p) -  g_\eta (\alpha p)||^2 \\
&= \langle  \alpha g_\eta p -  g_\eta (\alpha p),  \alpha g_\eta p -  g_\eta (\alpha p)  \rangle\\
&= ||\alpha g_\eta(p) ||^2 + || g_\eta (\alpha p)||^2  - 2 \langle \alpha g_\eta(p) ,  g_\eta (\alpha p)  \rangle\\
&=  |\alpha|||p ||^2 + || \alpha p||^2  - 2 \alpha^2 \langle  p, p  \rangle = 0
\end{align}
Similarly for vectors $p, q$, we have $ || g_\eta(p+q) -  (g_\eta (p)+g_\eta (q))||^2 = 0$

We now prove that the set $\mathcal{G}_\eta$ is a group. We start with proving the closure property. We have for any fixed $g_\eta, g'_\eta\in \mathcal{G}_\eta$
$$ g_\eta (g'_\eta(\eta(x))) = g_\eta(\eta(g'(x))) = \eta(g(g'(x))) = \eta(g''(x))  = 
g''_\eta(\eta(x))$$

Since $g''\in \mathcal{G}$ therefore $g''_\eta\in \mathcal{G}_\eta$ by definition. Also, $ g_\eta g'_\eta =  g''_\eta$ and thus closure is established. Associativity, identity and inverse properties can be proved similarly. The set $\mathcal{G}_\eta = \{ g_\eta ~|~ g_\eta: \eta(x)\rightarrow \eta(gx)~\forall g\in \mathcal{G} \}$ is therefore a unitary-group in $\eta$.
\end{proof}

\end{document}